\documentclass[runningheads]{llncs}
\usepackage[T1]{fontenc}
\usepackage{graphicx}
\usepackage{graphics} 
\usepackage{epsfig} 
\usepackage{amsmath} 
\usepackage{amssymb}  
\usepackage{mathtools}
\usepackage{tensor}    
\usepackage{xcolor}
\usepackage{booktabs} 
\usepackage{multirow}
\usepackage{colortbl}
\usepackage{BOONDOX-cal} 

\usepackage{algorithm} 
\usepackage{algpseudocode}

\usepackage{cite} 

\usepackage{hyperref}
\hypersetup{
	colorlinks=true,
	urlcolor=blue
}

\newcommand{\rot}[2]{\tensor*[ ^{#2}_{#1} ]{\boldsymbol{R}}{}} 

\newcommand{\pos}[2]{\tensor*[_{#1\hspace{1pt}}^{#2}]{\boldsymbol{p}}{}} 

\newcommand{\s}[1]{\tensor*[_{#1}]{\boldsymbol{\mathcal{\xi}}}{}} 

\newcommand{\skewsym}[1]{   \left[{#1}\right]_\times}    

\newcommand{\refframe}[1]{\{#1\}}

\newcommand{\mat}[1]{\boldsymbol{#1}} 
\newcommand{\screw}[1]{\boldsymbol{\mathcal{#1}}} 

\begin{document}
	
\title{A Coordinate-Invariant Local Representation of Motion and Force Trajectories for Identification and Generalization Across Coordinate Systems}

\titlerunning{Coordinate-Invariant Representation of Motion and Wrench Trajectories}

\author{Arno~Verduyn\inst{1,2,}\thanks{Corresponding author: \email{arno.verduyn@kuleuven.be}}\orcidID{0000-0002-5073-3881} 
\and
Erwin~Aertbeli\"en\inst{1,2}\orcidID{0000-0002-4514-0934} 
\and
Maxim~Vochten\inst{3}\orcidID{0000-0001-5070-846X} 
\and
Joris~De~Schutter\inst{2}\orcidID{0000-0001-9619-5815}}
\authorrunning{A. Verduyn et al.}
%
\institute{These authors contributed equally to this work. \and Department of Mechanical Engineering and Flanders Make at KU Leuven, 3001 Leuven, Belgium. \and
Department of Mechanics, Royal Military Academy, 1000 Brussels, Belgium.}
\maketitle              
\begin{abstract}

Identifying the trajectories of rigid bodies and of interaction forces is essential for a wide range of tasks in robotics, biomechanics, and related domains. These tasks include trajectory segmentation, recognition, and prediction. For these tasks, a key challenge lies in achieving consistent results when the trajectory is expressed in different coordinate systems.
A way to address this challenge is to utilize trajectory models that can generalize across coordinate systems. The focus of this paper is on such trajectory models obtained by transforming the trajectory into a coordinate-invariant representation.
However, coordinate-invariant representations often suffer from sensitivity to measurement noise and the manifestation of singularities in the representation, where the representation is not uniquely defined. This paper aims to address this limitation by introducing the novel Dual-Upper-Triangular Invariant Representation (DUTIR), with improved robustness to singularities, along with its computational algorithm. The proposed representation is formulated at a level of abstraction that makes it applicable to both rigid-body trajectories and interaction-force trajectories, hence making it a versatile tool for robotics, biomechanics, and related domains.

\keywords{Trajectory \and Invariance \and Screw Theory \and  Twist \and Wrench.}
\end{abstract}
\section{Introduction}
\label{sec:intro}
Identifying the trajectories of rigid bodies and of interaction forces is essential for a wide range of tasks in robotics~\cite{vochten2023invariant}, biomechanics~\cite{Ancillao2022}, and related fields. These tasks include trajectory segmentation~\cite{verduyn2023enhancingmotiontrajectorysegmentation}, recognition~\cite{WU2008,Joris2012,Vochten2015,DSRF2018,Lee2018,RRV2018}, and prediction~\cite{prediction_dual_quat}. Typically, these tasks are performed using algorithms that operate on trajectory coordinates expressed in a specific coordinate system, such as a body-fixed coordinate system attached to a mobile robot, or a coordinate system attached to a sensor. 
A key challenge, however, is to achieve consistent results across different choices of coordinate system~\cite{vochten2019generalizing}. This challenge arises whenever the context of the trajectory changes, for instance, due to a new robot configuration or a different spatial location or orientation of the trajectory.
A common way to avoid this challenge is to enforce consistent coordinate systems during data collection through coordinate system calibration procedures.
However, these calibration procedures are often labour-intensive. 
Furthermore, an open problem remains: how to handle existing datasets in which the coordinate systems are misaligned and where the relationships between the different coordinate systems are unknown. A more rigorous way to address this challenge is to learn trajectory models that generalize across coordinate systems. In the literature, methods exist to learn such models from demonstration data \cite{DeSchutterJoris2010,Vochten2015,vochten2023invariant,DSRF2018,Lee2018,RRV2018,CNN2016,ViewInvariantANN2012}. A differentiator among these methods is the amount of data that is required. In this paper, we focus on methods that require only one demonstration. Specifically, we focus on methods based on linear algebra and differential geometry that transform the demonstrated trajectory into a representation that remains unchanged when the coordinate system changes, referred to as a \textit{coordinate-invariant representation}.

Several invariant representations have been introduced in the literature. Some works utilize singular value decompositions to embed trajectory segments into coordinate-invariant subspaces~\cite{RRV2018}. Other works~\cite{DeSchutterJoris2010,Vochten2015,vochten2023invariant,Lee2018,RRV2018} aim to eliminate the need for prior trajectory segmentation by proposing representations based on \textit{local} differential-geometric trajectory features that are inherently coordinate-invariant. Such local features can be computed either analytically in closed form~\cite{DeSchutterJoris2010,Vochten2015} or iteratively via optimization-based methods~\cite{vochten2023invariant}.

Despite the advantages of pure locality and coordinate-invariance, local invariant representations often show low robustness in practice due to sensitivity to measurement noise and the manifestation of singularities in the representation, where the representation is not uniquely defined. Furthermore, the iterative optimization-based method proposed in~\cite{vochten2023invariant} requires relatively long computation times compared to analytical approaches, which limits its suitability for real-time use. These limitations motivate the research question of the work in this paper:
\begin{center}
	\textit{How can rigid-body trajectories and interaction-force trajectories be represented in a coordinate-invariant manner while ensuring (1) pure locality of the representation, (2) low computational complexity, and (3) high robustness?
	}
\end{center} 
This paper introduces a novel local invariant trajectory representation with improved robustness to singularities, along with its analytical computational algorithm. An earlier version of this work appeared as a preprint~\cite{verduyn2025biltsbiinvariantsimilaritymeasure}. Since then, the invariant representation has been successfully applied to motion recognition tasks and shown to outperform other invariant representations~\cite{phdarno,HPM_recognition}. 
However, the theoretical foundations of the invariant representation have not yet been formally introduced in a peer-reviewed publication. The present paper aims to fill this gap. Furthermore, in this paper, the proposed representation is reformulated at a level of abstraction that makes it applicable to both rigid-body trajectories and interaction-force trajectories, hence making it a versatile tool for robotics, biomechanics, and related domains. 

\section{Preliminaries}
\label{sec:preliminaries}
This section explains the concepts of \textit{twist}, \textit{wrench} and the general \textit{screw}, which are mathematical entities that can be used to represent rigid-body motions, interaction forces, and their \textit{trajectories}. 

\subsection{Rigid-Body Motion}
A \textit{body} can be interpreted as a collection of points. If the relative distances between these points remain constant over time, the body is considered non-deformable or \textit{rigid}. 
To describe its motion, we commonly define a coordinate system and rigidly attach it to the body. The orientation of this coordinate system then represents the orientation of the body, while the position of its origin represents the position of a chosen point of the body, referred to as a \textit{body reference point}.
The motion of the rigid body can then be completely described by the kinematics of this coordinate system attached to the body. The first-order kinematics consist of the rotational velocity $\boldsymbol{\omega}$ of the coordinate system and the translational velocity $\boldsymbol{v}$ of its origin. 

The two three-dimensional vectors, $\boldsymbol{\omega}$ and $\boldsymbol{v}$, are commonly combined into a single six-dimensional vector, $\screw{t} = (\boldsymbol{\omega}^\top, \boldsymbol{v}^\top)^\top$, referred to as the \textit{twist}.
Numerically, the twist can be computed from the position and orientation of the body-fixed coordinate system at two successive time instances using the logarithmic map~\cite{lynch2017modern}. In this paper, we only consider twists where the coordinates of both $\boldsymbol{\omega}$ and $\boldsymbol{v}$ are expressed in one and the same coordinate system, and $\boldsymbol{v}$ represents the velocity of the origin of that coordinate system.
Such a twist is also referred to as a \textit{screw}\cite{murray1994}, as further explained in Section~\ref{sec:screws}.

We indicate the choice of coordinate system for the twist with a subscript. For example, ${_a}\screw{t}$ is read as ``the twist expressed in coordinate system~$\{a\}$.''
Changing between coordinate systems, for example from $\refframe{a}$ to $\refframe{b}$ can be done using the $6\times6$ \textit{transformation matrix} ${^a_b}\mat{S}$ :
\begin{align}
	\label{eq:screw_transformation}
	_b\screw{t} & = {^a_b}\mat{S} \ _a\screw{t} \quad \text{with} \quad {^a_b}\mat{S} =  \small \begin{bmatrix}
		\rot{b}{a}            & \boldsymbol{0} \\
		\skewsym{\pos{b}{a}} \rot{b}{a}  & \rot{b}{a}
	\end{bmatrix},
\end{align}
and where $\rot{b}{a}$ represents the relative orientation of $\{a\}$ with respect to $\{b\}$, encoded as a $3\times3$ orthogonal rotation matrix, and where $\pos{b}{a}$ represents the position vector from the origin of $\{b\}$ to the origin of $\{a\}$, with its coordinates expressed in coordinate system $\{b\}$.  The operator $[~\cdot~]_\times$ in \eqref{eq:screw_transformation} transforms the three-dimensional vector $\pos{b}{a}$ into a $3\times3$ skew-symmetric matrix, which is commonly used to represent cross-product operations in matrix form~\cite{murray1994}.

\subsection{Rigid-Body Wrench}

A rigid body can be subjected to forces and torques. In that case, all the forces and torques acting on the body can be reduced to a single resultant force $\boldsymbol{f}$ and a single resultant torque $\boldsymbol{\tau}$ about a specified reference point without loss of generality. 
The two three-dimensional vectors $\boldsymbol{f}$ and $\boldsymbol{\tau}$ can be combined into a six-dimensional vector $\screw{w} = (\boldsymbol{f}^\top, \boldsymbol{\tau}^\top)^\top$, referred to as the \textit{wrench}.
Similarly to twists, we again only consider wrenches with their components, $\boldsymbol{f}$ and $\boldsymbol{\tau}$, corresponding to a single coordinate system.
Changing between coordinate systems can be done similarly as for twists using the same $6\times6$ transformation matrix in \eqref{eq:screw_transformation}, thanks to the chosen ordering of the components $\boldsymbol{f}$ and $\boldsymbol{\tau}$ in the wrench.

\subsection{Abstraction to General Screws}
\label{sec:screws}
Both the twist and the wrench belong to the category of \textit{screws}. A screw is a six-dimensional vector that can be associated with a physically meaningful axis in space, referred to as the underlying \textit{screw axis}. The interpretation of screw axes for twists and wrenches originates from classical mechanics, with Chasles~\cite{chasles1830note} showing that a rigid-body motion can always be described as a rotation and translation along an axis in space, and with Poinsot~\cite{poinsot} showing that force systems can always be reduced to a single force and torque along an axis. 

Formally, a screw, denoted as $\boldsymbol{\xi}$, is a six-dimensional vector consisting of two vectors, $\boldsymbol{\alpha}\in\mathbb{R}^{3\times1}$ and  $\boldsymbol{\beta}\in\mathbb{R}^{3\times1}$:
\begin{equation}
	\boldsymbol{\xi} =  \small \begin{pmatrix} \boldsymbol{\alpha} \\ \boldsymbol{\beta}
	\end{pmatrix} . 
\end{equation} 
For twists $\boldsymbol{\alpha} = \boldsymbol{\omega}$ and $\boldsymbol{\beta} = \boldsymbol{v}$, while for wrenches $\boldsymbol{\alpha} = \boldsymbol{f}$ and $\boldsymbol{\beta} = \boldsymbol{\tau}$.
The direction of the screw axis of $\boldsymbol{\xi}$, denoted as $\boldsymbol{e}$, can be found by:
\begin{equation}
	\label{eq:direction_screw}
	\boldsymbol{e} =  \small \frac{\boldsymbol{\alpha}}{\Vert\boldsymbol{\alpha}\Vert},
\end{equation} 
while the position of the point $\boldsymbol{p}_\perp$ on the screw axis closest to the origin of the coordinate system can be found by: 
\begin{equation}
	\label{eq:p_perp}
	\boldsymbol{p}_\perp =  \small \frac{\boldsymbol{\alpha}\times\boldsymbol{\beta}}{\Vert\boldsymbol{\alpha}\Vert^2}. 
\end{equation} 

Note that all screws $\boldsymbol{\xi}$ transform between coordinate systems with the transformation matrix $\boldsymbol{S}$ as defined in \eqref{eq:screw_transformation}. For this reason, the matrix $\boldsymbol{S}$ is also referred to as the \textit{screw-transformation matrix}. 

In the remainder of this paper, we reason at the level of general screws. As a result, the introduced concepts and methods apply seamlessly to representations of rigid-body motion and of interaction forces.

\subsection{Screw Trajectories}
To analyse how a screw evolves during a certain time duration, we can evaluate the screw at consecutive time instances. We denote the time instance at which the screw $\boldsymbol{\xi}$ is evaluated explicitly by adding the argument $[t_i]$, yielding $\boldsymbol{\xi}[t_i]$.
In this specific case, time is used to represent a certain \textit{progress variable} along which the screw is evaluated. 
We refer to the sequence of screws: $\boldsymbol{\xi}[t_i]$ for $i\in[1,N]$ as a \textit{discrete-time screw trajectory}, where $N$ represents the total number of time samples. The choice for a discrete trajectory is motivated by the fact that this is how trajectory data is commonly provided by sensors in practical applications.

Apart from time, other choices for the progress variable exist. For example, when analysing motion trajectories of rigid bodies, common choices for the progress variable are the angle traversed by the body during rotation~\cite{Roth2005}, and the arc length traced by a reference point attached to the body during translation~\cite{DSRF2018}. In the remainder of this paper, we make abstraction of the specific choice for the progress variable and reason at the level of a generic progress variable denoted by $x$, yielding $\boldsymbol{\xi}[x_i]$.
This unified perspective allows the introduced concepts and methods to apply seamlessly to all choices for the progress variable.

\section{Proposed Method}
\label{sec:method}

As explained in the previous section, the coordinates of screw trajectories depend on the choice of coordinate system in which they are expressed. The aim of this section is to introduce a novel coordinate-invariant representation of screw trajectories. The proposed representation is obtained by first capturing the local evolution of the trajectory (see Sec.~\ref{sec:local_rep}) and then describing this local trajectory evolution in a coordinate-invariant manner (see Sec.~\ref{sec:SU}).

\subsection{Representation of Local Trajectory Evolution}
\label{sec:local_rep}

Given that the second-order derivative of the screw $\s{}[x_i]$ is numerically approximated by:
\begin{equation}
	\s{}''[x_i] =  \small \frac{\s{}[x_{i+1}] - 2\s{}[x_i] + \s{}[x_{i-1}]}{(\Delta x)^2} + \mathcal{O}(\Delta x),
\end{equation}
it follows that the sequence of the three consecutive screws: $\s{}[x_{i-1}]$, $\s{}[x_i]$, and $\s{}[x_{i+1}]$, captures both the magnitude and direction of $\s{}[x_i]$, as well as its local second-order evolution at $x_i$. We denote this sequence of screws by the $6\times3$ matrix $\boldsymbol{\Xi}[x_i]$:
\begin{equation}
	\label{eq:def_Xi}
	\boldsymbol{\Xi}[x_i] = \begin{bmatrix}
		\s{}[x_{i-1}] & \s{}[x_i] & \s{}[x_{i+1}]
	\end{bmatrix} =  \small \begin{bmatrix}
	\boldsymbol{\alpha}[x_{i-1}] & \boldsymbol{\alpha}[x_i] & \boldsymbol{\alpha}[x_{i+1}] \\
	\boldsymbol{\beta}[x_{i-1}] & \boldsymbol{\beta}[x_i] & \boldsymbol{\beta}[x_{i+1}]
	\end{bmatrix}.
\end{equation}
Further on, we omit the argument $[x_i]$ when clear from the context. 

\subsection{Coordinate-Invariant Representation of $\boldsymbol{\Xi}$}
\label{sec:SU}
We will show that the matrix $\boldsymbol{\Xi}$ can be mathematically decomposed into the product of a screw-transformation matrix $\boldsymbol{S}$ and a coordinate-invariant representation $\boldsymbol{U}$. We refer to this novel decomposition as the \textit{SU-decomposition}\footnote{The proposed decomposition is inspired by the QR-decomposition of a $3\times3$ matrix.} and to the corresponding representation as the \textit{Dual-Upper-Triangular Invariant Representation} (DUTIR).
Theorem~\ref{th:SU} introduces these concepts, while Property~\ref{pr:SU_invariance} details the coordinate invariance of the corresponding representation.

\begin{theorem}[Existence of an $SU$-Decomposition]
	\label{th:SU}
	For any $\boldsymbol{\Xi}[x_i]$ in $\mathbb{R}^{6\times3}$ that satisfies the two regularity conditions:
	\begin{equation}
		\label{eq:SU_reg_con_1}
		\Vert \boldsymbol{\alpha}[x_{i-1}] \Vert \neq 0, \quad \text{and} \quad \boldsymbol{\alpha}[x_{i-1}] \times \boldsymbol{\alpha}[x_{i}] \neq \boldsymbol{0},
	\end{equation}
	there exists a unique $6\times6$ screw-transformation matrix $\boldsymbol{S}$ and $6 \times 3$ twice upper triangular matrix $\boldsymbol{U}$, such that:
	\begin{equation}
		\label{eq:bilts_SU}
		\boldsymbol{\Xi} = \boldsymbol{S} \ \boldsymbol{U}, 
	\end{equation}
	where $\boldsymbol{U}$ is obtained by stacking two \(3\times 3\) upper-triangular blocks
	$\boldsymbol{U}_1$ and $\boldsymbol{U}_2$:
	\begin{equation}
		\boldsymbol{U} =
		\begin{bmatrix}
			\boldsymbol{U}_1\\[2pt]
			\boldsymbol{U}_2
		\end{bmatrix}
		=  \small
		\begin{bmatrix}
			u_{11} & u_{12} & u_{13} \\
			0      & u_{22} & u_{23} \\
			0      & 0      & u_{33} \\
			u_{41} & u_{42} & u_{43} \\
			0      & u_{52} & u_{53} \\
			0      & 0      & u_{63}
		\end{bmatrix},
	\end{equation}
	and where the convention $u_{11}>0$ and $u_{22}>0$ is imposed to ensure uniqueness. 
\end{theorem}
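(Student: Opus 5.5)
The plan is to split the equation $\boldsymbol{\Xi} = \boldsymbol{S}\boldsymbol{U}$ into its rotational (top) and translational (bottom) $3\times 3$ blocks and solve them in sequence. Write $\boldsymbol{A} = [\boldsymbol{\alpha}[x_{i-1}]\ \boldsymbol{\alpha}[x_i]\ \boldsymbol{\alpha}[x_{i+1}]]$ and $\boldsymbol{B} = [\boldsymbol{\beta}[x_{i-1}]\ \boldsymbol{\beta}[x_i]\ \boldsymbol{\beta}[x_{i+1}]]$. Take $\boldsymbol{S}$ of the form \eqref{eq:screw_transformation} with unknown rotation $\boldsymbol{R}$ and position $\boldsymbol{p}$. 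Then \eqref{eq:bilts_SU} is equivalent to the pair
\begin{equation*}
\boldsymbol{A} = \boldsymbol{R}\,\boldsymbol{U}_1, \qquad \boldsymbol{B} = \skewsym{\boldsymbol{p}}\boldsymbol{R}\,\boldsymbol{U}_1 + \boldsymbol{R}\,\boldsymbol{U}_2 .
\end{equation*}
The first equation does not involve $\boldsymbol{p}$ or $\boldsymbol{U}_2$, so I will first determine $\boldsymbol{R}$ and $\boldsymbol{U}_1$ from it, and then determine $\boldsymbol{p}$ and $\boldsymbol{U}_2$ from the second.

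\emph{Step 1 (top block: a QR-type decomposition with $\det \boldsymbol{R} = +1$).} Let $\boldsymbol{r}_1,\boldsymbol{r}_2,\boldsymbol{r}_3$ be the columns of $\boldsymbol{R}$. Read column by column, $\boldsymbol{A} = \boldsymbol{R}\boldsymbol{U}_1$ says
\begin{equation*}
\boldsymbol{\alpha}[x_{i-1}] = u_{11}\boldsymbol{r}_1, \qquad \boldsymbol{\alpha}[x_i] = u_{12}\boldsymbol{r}_1 + u_{22}\boldsymbol{r}_2 .
\end{equation*}
The first regularity condition in \eqref{eq:SU_reg_con_1} together with $u_{11}>0$ forces $\boldsymbol{r}_1 = \boldsymbol{\alpha}[x_{i-1}]/\Vert\boldsymbol{\alpha}[x_{i-1}]\Vert$ and $u_{11} = \Vert\boldsymbol{\alpha}[x_{i-1}]\Vert$. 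Next, $u_{12} = \boldsymbol{r}_1^\top\boldsymbol{\alpha}[x_i]$, and $\boldsymbol{r}_2$ is the normalized component of $\boldsymbol{\alpha}[x_i]$ orthogonal to $\boldsymbol{r}_1$. This component is nonzero precisely by the second regularity condition $\boldsymbol{\alpha}[x_{i-1}]\times\boldsymbol{\alpha}[x_i]\neq\boldsymbol{0}$, and $u_{22}>0$ fixes its sign.

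The point where this departs from the ordinary QR-decomposition is the third column. Since $\boldsymbol{R}$ must be a proper rotation, $\boldsymbol{r}_3 = \boldsymbol{r}_1\times\boldsymbol{r}_2$ is forced, and no sign convention on $u_{33}$ is needed or allowed. Then $u_{13},u_{23},u_{33}$ are the coordinates $\boldsymbol{r}_k^\top\boldsymbol{\alpha}[x_{i+1}]$. This works even if $\boldsymbol{\alpha}[x_{i+1}]$ is coplanar with the other two columns, in which case $u_{33}=0$. This is why only two regularity conditions appear. Each quantity was forced at every stage, which gives uniqueness of $\boldsymbol{R}$ and $\boldsymbol{U}_1$.

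\emph{Step 2 (bottom block: a triangular linear system for $\boldsymbol{p}$).} Multiply the second equation on the left by $\boldsymbol{R}^\top$ and use $\boldsymbol{R}^\top\skewsym{\boldsymbol{p}}\boldsymbol{R} = \skewsym{\boldsymbol{q}}$ with $\boldsymbol{q} = \boldsymbol{R}^\top\boldsymbol{p}$. This gives
\begin{equation*}
\boldsymbol{C} := \boldsymbol{R}^\top\boldsymbol{B} = \skewsym{\boldsymbol{q}}\boldsymbol{U}_1 + \boldsymbol{U}_2 .
\end{equation*}
Because $\boldsymbol{U}_2$ must be upper triangular, this equation holds exactly when two things are true: the three strictly-lower entries $(2,1)$, $(3,1)$ and $(3,2)$ of $\boldsymbol{C}-\skewsym{\boldsymbol{q}}\boldsymbol{U}_1$ vanish, and $\boldsymbol{U}_2$ equals the upper part of that matrix.

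Computing the first two columns of $\skewsym{\boldsymbol{q}}\boldsymbol{U}_1$ turns these three conditions into
\begin{equation*}
u_{11}q_3 = c_{21}, \qquad -u_{11}q_2 = c_{31}, \qquad u_{22}q_1 - u_{12}q_2 = c_{32} .
\end{equation*}
This system is triangular in $(q_3,q_2,q_1)$ with diagonal coefficients $\pm u_{11}$ and $u_{22}$, both nonzero. It therefore has exactly one solution $\boldsymbol{q}$. Then $\boldsymbol{p} = \boldsymbol{R}\boldsymbol{q}$, and $\boldsymbol{U}_2 = \boldsymbol{C}-\skewsym{\boldsymbol{q}}\boldsymbol{U}_1$ is upper triangular by construction.

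\emph{Conclusion.} Existence follows by assembling $\boldsymbol{S}$ from the $\boldsymbol{R}$ and $\boldsymbol{p}$ obtained above. Uniqueness holds because every step was forced: the top-block factorization was forced once $\det\boldsymbol{R}=+1$, $u_{11}>0$ and $u_{22}>0$ are imposed, and the bottom-block linear system is nonsingular. The step needing the most care is Step 1's treatment of $\boldsymbol{r}_3$ and $u_{33}$. There one must argue that properness of $\boldsymbol{R}$ replaces a sign convention on $u_{33}$, and that rank-deficiency in the third column causes no loss of uniqueness.
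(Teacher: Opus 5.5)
Your proposal is correct and follows essentially the same route as the paper's proof: a QR-type factorization $\boldsymbol{A} = \boldsymbol{R}\boldsymbol{U}_1$ with $u_{11},u_{22}>0$ and $\boldsymbol{r}_3$ fixed by right-handedness. The substitution $\boldsymbol{R}^\top\boldsymbol{p}$ then turns the three strictly lower entries of $\boldsymbol{R}^\top\boldsymbol{B} - [\boldsymbol{R}^\top\boldsymbol{p}]_\times\boldsymbol{U}_1$ into the same nonsingular triangular system the paper solves, and your column-by-column forcing argument makes uniqueness slightly more explicit than the paper's appeal to standard QR theory.
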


\begin{proof}
Decomposition~\eqref{eq:bilts_SU} can be expanded by writing $\boldsymbol{\Xi}$ as a vertical stack of two $3\times3$ matrices, $\boldsymbol{A}$ and $\boldsymbol{B}$, and by using the definition of $\boldsymbol{S}$ in \eqref{eq:screw_transformation}:
\begin{equation}
	\boldsymbol{\Xi}  =  \small
	\begin{bmatrix}
		\boldsymbol{A} \\
		\boldsymbol{B}
	\end{bmatrix}
	= \begin{bmatrix}
		~\rot{}{}~            & ~\boldsymbol{0}~ \\
		~\skewsym{\pos{}{}}\hspace{-2pt}\rot{}{}~  & ~\rot{}{}~
	\end{bmatrix}
	\begin{bmatrix}
		\boldsymbol{U}_1 \\
		\boldsymbol{U}_2
	\end{bmatrix} .
	\label{eq:extendedQR}
\end{equation}
The first block row of \eqref{eq:extendedQR} can be expressed as follows:
\begin{equation}
	\boldsymbol{A} = \rot{}{} \  \boldsymbol{U}_1 =  \small \rot{}{} \begin{bmatrix}
		u_{11} & u_{12} &  u_{13}\\
		0 & u_{22} & u_{23} \\
		0 & 0 & u_{33} \\
	\end{bmatrix},
	\label{eq:A1}
\end{equation}
where the first two columns of $\rot{}{}$ and $\boldsymbol{U}_1$ can be found using a standard QR-decomposition algorithm~\cite{golub2013matrix}, which is well-defined if the first two diagonal elements of $\boldsymbol{U}_1$, $u_{11}$ and $u_{22}$, both differ from zero. Afterwards, the sign convention, $u_{11}>0$ and $u_{22}>0$, can be imposed by flipping the sign of the corresponding row of $\boldsymbol{U}_1$ and of the corresponding column of $\rot{}{}$ whenever necessary.
The third column of $\rot{}{}$ and the sign of the third diagonal element $u_{33}$ is then determined by imposing that $\rot{}{}$ is a right-handed orthogonal matrix. This procedure results in a unique solution for the rotation matrix $\rot{}{}$ and upper-triangular matrix $\boldsymbol{U}_1$. 

The conditions for non-zero diagonal elements, $u_{11} \neq 0$ and $u_{22} \neq 0$, result in the regularity conditions shown in~\eqref{eq:SU_reg_con_1}. This follows from the definition of the QR-decomposition. Specifically, the first two columns of $\boldsymbol{R}$, i.e. $\boldsymbol{r}_1$ and $\boldsymbol{r}_2$, are related to $\boldsymbol{\alpha}[x_{i-1}]$ and $\boldsymbol{\alpha}[x_i]$ as follows:
\begin{align}
	\label{eq:r1}
	\boldsymbol{r}_{1} = \small \frac{\boldsymbol{\alpha}[x_{i-1}]}{\Vert \boldsymbol{\alpha}[x_{i-1}] \Vert}, \quad \text{and} \quad
	\boldsymbol{r}_{2} = \small \frac{\left(\boldsymbol{\alpha}[x_{i-1}] \times \boldsymbol{\alpha}[x_{i}]\right)\times\boldsymbol{\alpha}[x_{i-1}]}{\Vert \left(\boldsymbol{\alpha}[x_{i-1}] \times \boldsymbol{\alpha}[x_{i}]\right)\times\boldsymbol{\alpha}[x_{i-1}] \Vert}. 
\end{align}
Using~\eqref{eq:r1}, \eqref{eq:A1}, and \eqref{eq:def_Xi}, the scalars $u_{11}$, $u_{12}$ and $u_{22}$ can be written as:
\begin{equation}
	\label{eq:u_expanded}
	u_{11} = \boldsymbol{\alpha}[x_{i-1}] \cdot \boldsymbol{r}_{1} \quad,\quad
	u_{12} = \boldsymbol{\alpha}[x_{i}] \cdot \boldsymbol{r}_{1} \quad,\quad \text{and} \quad
	u_{22} = \boldsymbol{\alpha}[x_{i}] \cdot \boldsymbol{r}_{2}.
\end{equation}
Hence, $u_{11}$ is well-defined and non-zero if $\Vert\boldsymbol{\alpha}[x_{i-1}]\Vert \neq 0$, while $u_{22}$ is well-defined and non-zero if additionally $\boldsymbol{\alpha}[x_{i-1}] \times \boldsymbol{\alpha}[x_{i}] \neq \boldsymbol{0}$.

Thus far, matrices $\rot{}{}$ and $\boldsymbol{U}_1$ are completely and uniquely determined. The position vector $\pos{}{}$ in \eqref{eq:extendedQR} can then be determined as follows. After pre-multiplication with $\rot{}{}^\top$, the lower block row of \eqref{eq:extendedQR} can be written as:
\begin{align}
	\label{eq:1}
	\rot{}{}^\top \boldsymbol{B} & = \left( \rot{}{}^\top \skewsym{\pos{}{}} \rot{}{}\right) \  \boldsymbol{U}_1 + \boldsymbol{U}_2 .
\end{align}
Equation \eqref{eq:1} can be rewritten using the orthogonal basis transformation property of a skew-symmetric matrix~\cite{murray1994}: $\rot{}{}^\top [\pos{}{}]_\times \rot{}{} = [\rot{}{}^\top \pos{}{}]_\times$. This results in:
\begin{align}
	\rot{}{}^\top \boldsymbol{B} & = \left( [\rot{}{}^\top \pos{}{} \hspace{2pt}]_\times \right) \  \boldsymbol{U}_1 + \boldsymbol{U}_2.
\end{align}
Introducing $\pos{}{}^* \coloneq \rot{}{}^\top \pos{}{}$ results in:
\begin{align}
	\rot{}{}^\top \boldsymbol{B} & = \skewsym{\pos{}{}^*} \boldsymbol{U}_1 + \boldsymbol{U}_2.
	\label{eq:extendqr-origin}
\end{align}
From the lower-diagonal entries in \eqref{eq:extendqr-origin}, three scalar equations are obtained that do not rely on the unknown upper-triangular elements in $\boldsymbol{U}_2$. 
\begin{align}
	\label{eq:extendedqr_scalar0}
	(\rot{}{}^\top \boldsymbol{B})_{21} &= u_{11} \hspace{1pt} p^*_z, \\
	\label{eq:extendedqr_scalar1}
	(\rot{}{}^\top \boldsymbol{B})_{31} &= -u_{11} \hspace{1pt} p^*_y, \\          
	(\rot{}{}^\top \boldsymbol{B})_{32} &= u_{22} \hspace{1pt}  p^*_x -u_{12} \hspace{1pt} p^*_y, 	\label{eq:extendedqr_scalar}
\end{align}
where the notation $(\rot{}{}^\top \boldsymbol{B})_{ij}$ in the left-hand side of equations \eqref{eq:extendedqr_scalar0}-\eqref{eq:extendedqr_scalar} denotes the element in the $i^\text{th}$ row and $j^\text{th}$ column of $\rot{}{}^\top \boldsymbol{B}$. 

The coordinates $p^*_x,~p^*_y,~p^*_z$ of $\pos{}{}^*$ can be uniquely determined from  \eqref{eq:extendedqr_scalar0}-\eqref{eq:extendedqr_scalar}, again under the same conditions: $u_{11}\neq0$ and $u_{22}\neq0$.
Now that $\rot{}{}$, $\boldsymbol{U}_1$, and $\pos{}{}^*$ are completely determined, $\pos{}{}$ can be found as $\pos{}{} = \rot{}{} \ \boldsymbol{p}^*$, while $\boldsymbol{U}_2$ can be solved from~\eqref{eq:extendqr-origin}:
\begin{align}
	\label{eq:R2-computation}
	\boldsymbol{U}_2 & = \rot{}{}^\top \boldsymbol{B} - \skewsym{\pos{}{}^*} \boldsymbol{U}_1 .
\end{align}
The upper-triangular matrices $\boldsymbol{U}_1$ and $\boldsymbol{U}_2$, together with the orientation $\rot{}{}$ and position $\pos{}{}$ are now completely and uniquely determined. The derivation of this $SU$-decomposition algorithm proves that a unique $SU$-decomposition exists under the derived regularity conditions. \hfill$\blacksquare$
\end{proof}

\begin{property}
	\label{pr:SU_invariance}
	The matrix $\boldsymbol{U}$ is invariant to changes in coordinate system.
\end{property}
\begin{proof}
Consider $\boldsymbol{\Xi}$ expressed in coordinate system $\{1\}$, denoted as ${_1}\boldsymbol{\Xi}$. 
The coordinate system of ${_1}\boldsymbol{\Xi}$ can be changed to a different coordinate system $\{2\}$ by left multiplication with the screw-transformation matrix ${^1_2}\boldsymbol{S}$, such that:
\begin{equation}
	\label{eq:frames}
	{_2}\boldsymbol{\Xi} = {^1_2}\boldsymbol{S} \ {_1}\boldsymbol{\Xi} .
\end{equation}
Substituting the $SU$-decomposition~\eqref{eq:bilts_SU} of ${_1}\boldsymbol{\Xi}$ in \eqref{eq:frames} results in:
\begin{equation}
	{_2}\boldsymbol{\Xi} = {^1_2}\boldsymbol{S} \ \big({_1}\boldsymbol{S} \ \boldsymbol{U}\big) .
\end{equation}
Since the matrix multiplication of the two screw-transformation matrices, ${^1_2}\boldsymbol{S}$ and ${_1}\boldsymbol{S}$, yields another screw-transformation matrix ${_2}\boldsymbol{S}$,  this results in:
\begin{equation}
	{_2}\boldsymbol{\Xi} = {_2}\boldsymbol{S} \ \boldsymbol{U} .
\end{equation}
This proves that ${_2}\boldsymbol{S} \ \boldsymbol{U}$ is a valid $SU$-decomposition of ${_2}\boldsymbol{\Xi}$. Moreover, since the $SU$-decomposition is unique, the decomposition ${_2}\boldsymbol{S} \ \boldsymbol{U}$ is the only possible $SU$-decomposition of ${_2}\boldsymbol{\Xi}$. Hence, the $SU$-decomposition of ${_2}\boldsymbol{\Xi}$ necessarily results in the same $\boldsymbol{U}$ matrix as the one for ${_1}\boldsymbol{\Xi}$. This proves that $\boldsymbol{U}$ is invariant to changes in the coordinate system of $\boldsymbol{\Xi}$. \hfill$\blacksquare$
\end{proof}
\begin{remark}
	The matrix $\boldsymbol{\Xi}$ is a $6 \times 3$ matrix and hence contains $18$ degrees of freedom (DoFs). In the $SU$-decomposition~\eqref{eq:bilts_SU}, these $18$ DoFs are redistributed between $\boldsymbol{S}$ and $\boldsymbol{U}$. Matrix $\boldsymbol{S}$ contains $6$ DoFs: three arising from the position vector $\boldsymbol{p}$ and three arising from the rotation matrix $\boldsymbol{R}$. The remaining $12$ DoFs are contained in $\boldsymbol{U}$. The variability of the entries of $\boldsymbol{\Xi}$ can hence be separated into variability due to (1) a change of coordinate system, captured entirely by $\boldsymbol{S}$, and (2) a change in the intrinsic evolution of the screw $\boldsymbol{\xi}[x_i]$, captured in an invariant manner up to second order by $\boldsymbol{U}$. Furthermore, since both $\boldsymbol{S}$ and $\boldsymbol{U}$ are unique in the regular case, the $SU$-decomposition is a \textit{bijective map} between $\boldsymbol{\Xi}$ and the pair of matrices $(\boldsymbol{S},\boldsymbol{U})$.
\end{remark}

\subsection{Geometric Interpretation of the $SU$-decomposition}
\label{sec:geom}

The matrix $\boldsymbol{R}$ and vector $\boldsymbol{p}$ can be interpreted as the orientation and position coordinates of an orthogonal coordinate system, functionally defined by the screw axis and its motion (see Fig.~\ref{fig:constructiuon_frame}). The first column $\boldsymbol{r}_1$ of $\boldsymbol{R}$ defines the $x$-axis of this functional coordinate system and is aligned with the screw axis of $\boldsymbol{\xi}[x_{i-1}]$.  The third column $\boldsymbol{r}_3$ of $\boldsymbol{R}$ defines the $z$-axis of this functional coordinate system and is aligned with the common normal of the screw axes of $\boldsymbol{\xi}[x_{i-1}]$ and $\boldsymbol{\xi}[x_{i}]$. The $y$-axis follows from right-handed orthogonality. The origin $\boldsymbol{p}$ coincides with the intersection point of the screw axis of $\boldsymbol{\xi}[x_{i-1}]$ and the common normal. Although this geometric interpretation can be intuitively inferred from the structure of $\boldsymbol{U}$, particularly its zero entries, proofs of this interpretation are provided in Appendix~\ref{app:proofs} for completeness.
\begin{figure}
	\centering
	\includegraphics[width=0.4\textwidth]{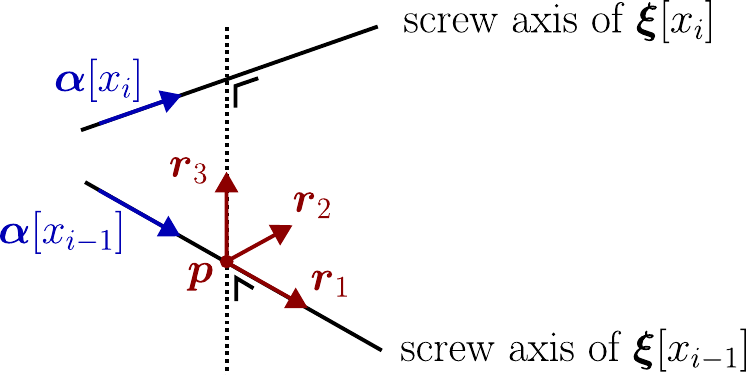}
	\caption{Geometric interpretation of $\boldsymbol{p}$ and $\boldsymbol{R} = [\boldsymbol{r}_1~\boldsymbol{r}_2~\boldsymbol{r}_3]$, which are the components of the screw-transformation matrix $\boldsymbol{S}$ in the $SU$-decomposition of \mbox{$\boldsymbol{\Xi} = [\boldsymbol{\xi}[x_{i-1}]~\boldsymbol{\xi}[x_i]~\boldsymbol{\xi}[x_{i+1}]]$}. The screw axes of $\boldsymbol{\xi}[x_{i-1}]$ and $\boldsymbol{\xi}[x_{i}]$ are shown as solid black lines. The common normal of the two screw axes is shown as a dotted black line.} 
	\label{fig:constructiuon_frame}
\end{figure}
\begin{remark}
	This geometric interpretation highlights that the $SU$-decomposition not only allows to compute the DUTIR of $\boldsymbol{\Xi}$, but it also enables an efficient, closed-form computation of key geometric entities. In particular, by computing $\boldsymbol{p}$ and $\boldsymbol{R}$, we compute the position and orientation of the screw axis of $\boldsymbol{\xi}[x_{i-1}]$ as well as the common normal of the screw axes of $\boldsymbol{\xi}[x_{i-1}]$ and $\boldsymbol{\xi}[x_{i}]$.
\end{remark}
\begin{remark}
	The position $\boldsymbol{p}$ and matrix $\boldsymbol{R}$ are geometric entities anchored to the screw trajectory. As a result, they transform consistently under changes of the coordinate system. Consequently, the computation of $\boldsymbol{R}$ and $\boldsymbol{p}$ from $\boldsymbol{\Xi}$ constitutes an \textit{equivariant map}~\cite{equivariant} with respect to coordinate system transformations.
\end{remark}

\subsection{Regularization of the $SU$-decomposition}
\label{sec:regularization}
When the regularity conditions~\eqref{eq:SU_reg_con_1} are not met, the QR-decomposition of $\boldsymbol{A}$ is ill-defined and the system of equations \eqref{eq:extendedqr_scalar0}-\eqref{eq:extendedqr_scalar} becomes underdetermined. Consequently, an infinite number of possible solutions for $\boldsymbol{R}$, $\boldsymbol{p}^*$, and $\boldsymbol{p}$ exist. 
An intuitive example is the case of a pure translational motion (where $\boldsymbol{\alpha} =  \boldsymbol{\omega} = \boldsymbol{0}$ and $\boldsymbol{\beta} = \boldsymbol{v} \neq 0$). In this case, the screw axes of $\boldsymbol{\xi}[x_{i-1}]$ and $\boldsymbol{\xi}[x_{i}]$ are ill-defined, since a pure translation can be represented by (1) a screw axis at a finite distance from the moving body and parallel to the translational velocity $\boldsymbol{v}$, or by (2) a screw axis at an infinite distance from the body and perpendicular to the translational velocity $\boldsymbol{v}$, since a pure translation can also be represented as a pure rotation (with infinitely small magnitude) along a screw axis at an infinite distance. In practice, the second solution is rarely meaningful. 
Inspired by the first, more intuitive case, we introduce a regularization procedure for the $SU$-decomposition in this subsection. 

\textbf{1) Regularization of $\boldsymbol{p}$:}
To regularize $\boldsymbol{p}$, we first predefine a maximum allowed distance between the point on the screw axis $\boldsymbol{p}$ and the origin of the coordinate system and denote it as $L$. This value for $L$ can be interpreted as a \textit{geometric scale}, defining a spatial region in which the computed value of $\boldsymbol{p}$ is considered relevant.
Then, after solving \eqref{eq:extendedqr_scalar0} - \eqref{eq:extendedqr_scalar} to obtain $\pos{}{}^*$, we evaluate whether regularization is required, which is the case when $\Vert\boldsymbol{p}^*\Vert > L$. In that case, we search for a regularized version of $\boldsymbol{p}^*$, denoted as $\hat{\boldsymbol{p}}^*$, that satisfies $\Vert\hat{\boldsymbol{p}}^*\Vert = L$. After performing the regularization action (which is detailed later in this subsection), the regularized matrix $\hat{\boldsymbol{U}}_{2}$ can be found by solving \eqref{eq:R2-computation} using $\hat{\boldsymbol{p}}^*$. 
An important consequence of this regularization is that, when active, the regularized matrix $\hat{\boldsymbol{U}}_2$ will no longer be strictly upper-triangular. Instead, the lower-diagonal elements of $\hat{\boldsymbol{U}}_2$ will be non-zero and will depend on the location of the origin of the coordinate system.
Ideally, these lower-diagonal elements remain small such that a high resemblance to the original $\boldsymbol{U}_2$ is obtained. To this end, we design the regularization such that the sum of squares of the lower-diagonal terms in the first column\footnote{Alternatively, the sum of squares of all the lower-diagonal terms in $\hat{\boldsymbol{U}}_2$ can be minimized, although this will result in a problem with a higher complexity.} of $\hat{\boldsymbol{U}}_2$ are minimized. Expressions for these terms, denoted as $\epsilon_{51}$ and $\epsilon_{61}$, follow from \eqref{eq:extendedqr_scalar0} and \eqref{eq:extendedqr_scalar1}:
\begin{equation}
	\epsilon_{51} = (\rot{}{}^\top \boldsymbol{B})_{21} -u_{11} \hspace{1pt} \hat{p}^*_z \quad \text{and} \quad
	\epsilon_{61} = (\rot{}{}^\top \boldsymbol{B})_{31} + u_{11} \hspace{1pt} \hat{p}^*_y ,
\end{equation}
where $\epsilon_{ij}$ denotes the lower-diagonal entry at the $i$-th row and $j$-th column of $\boldsymbol{U}$. 
Minimizing the sum of squares of $\epsilon_{51}$ and $\epsilon_{61}$ while enforcing $\Vert \hat{\boldsymbol{p}}^* \Vert = L$ can be done by minimizing the Lagrangian function $\mathcal{L}(\hat{p}^*_x,\hat{p}^*_y,\hat{p}^*_z,\lambda)$, defined as:
\begin{equation}
	\label{eq:lagrangian}
	\mathcal{L}(\hat{p}^*_x,\hat{p}^*_y,\hat{p}^*_z,\lambda) = \epsilon_{51}^2 + \epsilon_{61}^2 + \lambda \left[L^2 - (\hat{p}^*_x)^2 - (\hat{p}^*_y)^2 - (\hat{p}^*_z)^2\right],
\end{equation}
where $\lambda$ denotes the Lagrange multiplier enforcing the constraint.
Extrema of the Lagrangian function can be found by searching for points where its gradient is equal to the zero vector. Following this approach, two solutions for $\hat{\boldsymbol{p}}^*$ can be found. 
The first solution has the following form: 
\begin{equation} 
	\label{eq:sol_px1}
	\hat{p}^*_y = p^*_y \quad , \quad \hat{p}^*_z = p^*_z  \quad \text{and} \quad
	\hat{p}^*_x =  \text{sign}(p^*_x)\sqrt{L^2 - (p^*_y)^2 - (p^*_z)^2}. 
\end{equation}
Hence, the first solution consists of retaining the original $p^*_y$ and $p^*_z$, while projecting $p^*_x$ along the screw axis of $\boldsymbol{\xi}[x_{i-1}]$ onto the spherical manifold defined by $L$. This solution is a valid solution when $(p^*_y)^2 + (p^*_z)^2 < L^2$.  

The second solution has the following form:
\begin{equation}
	\label{eq:sol_2}
	\hat{p}^*_x = 0
	\quad , \quad 
	\hat{p}^*_y  = \small L \frac{p^*_y}{\sqrt{(p^*_y)^2 + (p^*_z)^2}} \quad \text{and} \quad 
	\hat{p}^*_z  = \small L \frac{p^*_z}{\sqrt{(p^*_y)^2 + (p^*_z)^2}} .
\end{equation}
Hence, the second solution consists of setting $\hat{p}^*_x = 0$, while isotropically projecting $p^*_y$ and $p^*_z$ onto the spherical manifold defined by $L$. The two solutions for $\hat{\boldsymbol{p}}^*$ are visualized in Figure~\ref{fig:regularization}. 
\begin{figure}
	\centering
	\includegraphics[width=0.7\textwidth]{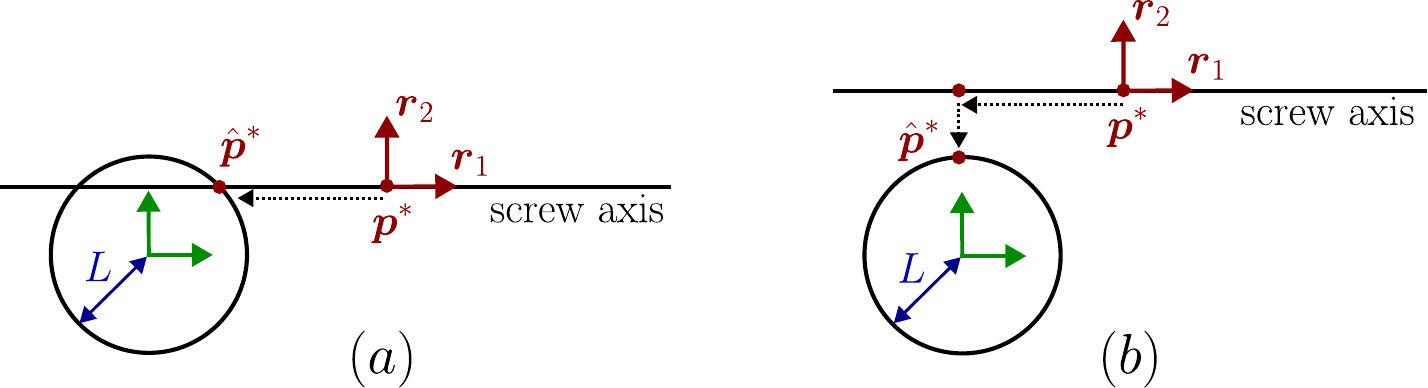}
	\caption{Two-dimensional illustration of the regularization of $\boldsymbol{p}^*$, showing the coordinate system of $\boldsymbol{\Xi}$ (green), the spherical manifold defined by $L$ (black circle), the screw axis of $\boldsymbol{\xi}[x_{i-1}]$ (black line), and the functional coordinate system defined by $\boldsymbol{p}^*$, $\boldsymbol{r}_1$, and $\boldsymbol{r}_2$ (red). The solution for $\hat{\boldsymbol{p}}^*$ is illustrated for (a) the case where the screw axis intersects the spherical manifold, i.e., when $(p^*_y)^2 + (p^*_z)^2 < L^2$, and (b) the case where the screw axis does not intersect the spherical manifold, i.e., when $(p^*_y)^2 + (p^*_z)^2 > L^2$.}
	\label{fig:regularization}
	\vspace{-2pt}
\end{figure}

\noindent Based on these two solutions, the regularization of $\boldsymbol{p}$ is implemented as follows. We first compute $\boldsymbol{p}^*$ and test whether regularization is required, which is the case when $\Vert \boldsymbol{p}^*\Vert > L$. If so, we test whether $(p^*_y)^2 + (p^*_z)^2 < L^2$. If this condition holds, it means that the screw axis intersects the spherical manifold (see Fig.~\ref{fig:regularization}a) and the first solution for $\hat{\boldsymbol{p}}^*$ should be selected. If the condition does not hold, we use the second solution. Finally, $\hat{\boldsymbol{p}}$ is found as $\hat{\boldsymbol{p}} = \boldsymbol{R} \  \hat{\boldsymbol{p}}^*$, and the matrix $\hat{\boldsymbol{U}}_{2}$ can be found by solving \eqref{eq:R2-computation} using $\hat{\boldsymbol{p}}^*$.

\textbf{2) Regularization of $\boldsymbol{R}$:}
In \eqref{eq:A1}, $\boldsymbol{R}$ is computed by triangularizing $\boldsymbol{A}$ into $\boldsymbol{U}_1$. However, when the regularity conditions~\eqref{eq:SU_reg_con_1} are not met, $\boldsymbol{U}_1$ does not provide sufficient information to determine a unique rotation matrix $\boldsymbol{R}$. To address this issue, we propose to regularize $\boldsymbol{R}$ by using additional information about the trajectory encoded in $\hat{\boldsymbol{U}}_2$. For instance, in the case of pure translational motion, determining $\boldsymbol{R}$ from $\boldsymbol{A}$ is ill-posed since $\boldsymbol{A}$ consists of rotational velocity vectors, which are zero vectors in this scenario. In this case, the columns of $\boldsymbol{R}$ should be inferred from the translational velocity vectors in $\hat{\boldsymbol{U}}_2$.

In particular, we regularize $\boldsymbol{R}$ as $\hat{\boldsymbol{R}} = \boldsymbol{R}\boldsymbol{R}_c^\top$, where $\boldsymbol{R}_c$ is a corrective rotation matrix. The rotation $\boldsymbol{R}_c$ is determined by jointly optimizing over the column spaces of $\boldsymbol{U}_1$ and $\hat{\boldsymbol{U}}_2$ such that both $\boldsymbol{R}_c \boldsymbol{U}_1$ and $\boldsymbol{R}_c \hat{\boldsymbol{U}}_2$ are as close as possible to upper-triangular matrices. This is achieved via a Procrustes alignment~\cite{schonemann1966generalized}:
\begin{equation}
	\label{eq:reg_R}
	\operatorname*{minimize}_{\boldsymbol{R}_c} \; ~~~w^2\Big\Vert \boldsymbol{U}_1 - \boldsymbol{R}_{c}\boldsymbol{U}_1 \Big\Vert^2_F ~+~ \left\Vert\hat{\boldsymbol{U}}_{2,\triangle} - \boldsymbol{R}_{c}\hat{\boldsymbol{U}}_2 \right\Vert^2_F , 
\end{equation}
where $\Vert \cdot \Vert_F$ denotes the matrix Frobenius norm, $\hat{\boldsymbol{U}}_{2,\triangle}$ is the triangularization\footnote{Following the same procedure used to triangularize $\boldsymbol{A}$ into $\boldsymbol{U}_1$.} of the approximately upper-triangular matrix $\hat{\boldsymbol{U}}_{2}$, and $w$ is a weighting parameter expressed in meters. Introducing the weight $w$ is necessary because the entries of $\boldsymbol{U}_1$ and $\hat{\boldsymbol{U}}_2$ have different physical units: for twists, $\boldsymbol{U}_1$ contains angular components in \textit{rad/progress}, while $\hat{\boldsymbol{U}}_2$ contains translational components in \textit{m/progress}. For wrenches, the corresponding units are \textit{N} and \textit{Nm}, respectively. 

Given the computed matrix $\boldsymbol{R}_c$, the regularized matrix $\hat{\boldsymbol{U}}_1$ is obtained as $\hat{\boldsymbol{U}}_1 = \boldsymbol{R}_{c}\boldsymbol{U}_1$, while $\hat{\boldsymbol{U}}_2$ is updated as $ \boldsymbol{R}_{c}\hat{\boldsymbol{U}}_2$. 

\begin{remark}
	The proposed regularization introduces two distinct solution cases for the regularized $SU$-decomposition. When the regularization is inactive, the solutions for $\hat{\boldsymbol{p}}$ and $\hat{\boldsymbol{R}}$ remain \textit{purely coordinate-invariant}. When the regularization is active, the solutions for $\hat{\boldsymbol{p}}$ and $\hat{\boldsymbol{R}}$ become sensitive to the choice of the origin of the coordinate system. In this case, decreasing $L$ results in an increase in sensitivity to this choice. In other words, decreasing $L$ shifts the solution for the $SU$-decomposition from a purely coordinate invariant one to a coordinate-dependent one anchored to the origin of the coordinate system. 
	
	We propose to set $w = L$ such that the regularization is completely determined by a single parameter. Choosing $w = L$ is allowed since $w$ and $L$ share the same physical unit (meters). Furthermore, this choice is conceptually well motivated: both $L$ and $w$ govern the extent to which the regularized $SU$-decomposition depends on the coordinate-system origin. Specifically, decreasing either $L$ or $w$ increases the sensitivity to this origin.
	
	Finally, in practical applications, we propose to treat $L$ as a hyperparameter. For instance, in recognition tasks, an optimal value for $L$ for a given dataset can be selected via standard hyperparameter validation procedures, such as maximizing recognition performance on a held-out validation set. 
\end{remark}

\subsection{Numerical Example}
\label{sec:examples}

In this subsection, we present a numerical example demonstrating the application of the $SU$-decomposition to rigid-body motion trajectory data. We use real trajectory data from \cite{verduyn2023enhancingmotiontrajectorysegmentation}, corresponding to a human-demonstrated pouring task performed with a kettle. The trajectory is parametrized with respect to time $t$, and consists of three distinct motion segments: sliding the kettle along the table ($t\in[0,0.7]$), lifting the kettle ($t\in[0.7,1.6]$), and pouring ($t\in[1.6,2.8]$). 

To highlight coordinate invariance, the $SU$-decomposition is applied twice using different choices of the coordinate system attached to the body (see Fig.~\ref{fig:overview}a). From the successive positions and orientations of the coordinate system, twist coordinates are computed via numerical differentiation using the logarithmic map~\cite{lynch2017modern} (see Fig.~\ref{fig:overview}b). As expected, the resulting twist coordinates differ under changes in coordinate system.  Subsequently, $\boldsymbol{U}$ is evaluated along the trajectory at successive time instances. To assess the effect of regularization, we computed the $SU$-decomposition both without regularization (see Fig.~\ref{fig:overview}c) and with regularization when choosing $L = 0.3$m (see Fig.~\ref{fig:overview}d). 

In the unregularized case (see Fig.~\ref{fig:overview}c), the computed $\boldsymbol{U}$ is \textit{strictly} twice upper-triangular. Furthermore, the results for the first coordinate system are identical to the ones for the second coordinate system, which confirms the coordinate invariance of $\boldsymbol{U}$. However, the components $u_{53}$ and $u_{63}$ exhibit pronounced nervousness during the sliding segment ($t\in[0,0.7]$), reflecting the nervous motion of the screw axis during this segment. This nervousness arises since the sliding motion corresponds to a near-pure translation, and pure translation corresponds to a singularity in the $SU$-decomposition.

In the regularized case (see Fig.~\ref{fig:overview}d), the computed $\boldsymbol{U}$ is \textit{approximately} twice upper-triangular. Figure~\ref{fig:overview}d also shows that the nervousness of $u_{53}$ and $u_{63}$ is substantially reduced, and that more pronounced signals appear in the profiles of $u_{41}$, $u_{42}$, and $u_{43}$ during the sliding segment. This is a desirable effect since the magnitude of these signals corresponds to the magnitude of the translational velocity during sliding. These results hence showcase that the regularized approach yields a less nervous and more interpretable representation of near-pure translational motion.

Additional examples and the implementation of the $SU$-decomposition are provided at \url{https://github.com/arnoverduyn/SU_decomposition}.

\begin{figure}[t!]
	\centering
	\includegraphics[width=0.85\textwidth]{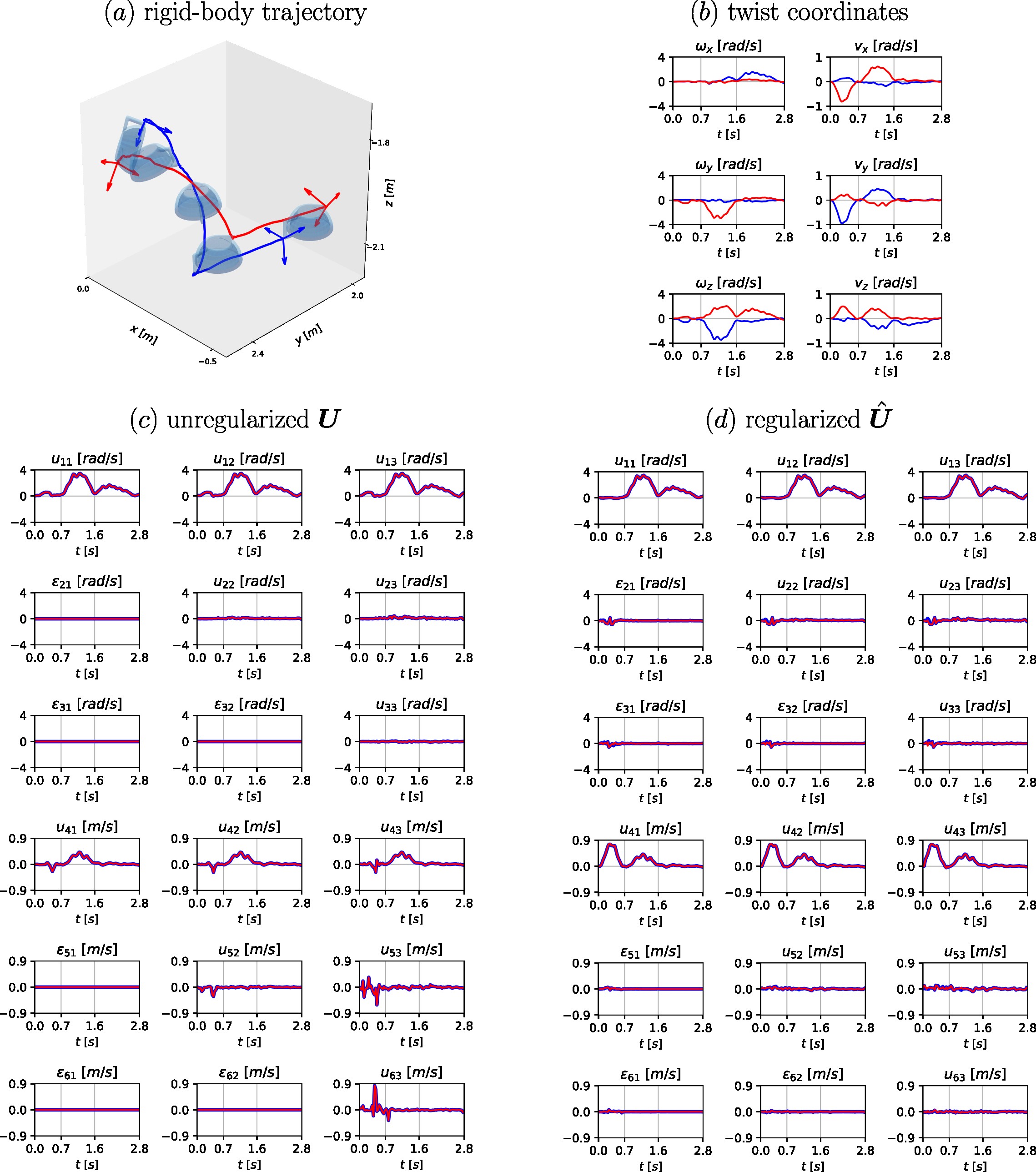}
	\caption{Numerical example of the $SU$-decomposition applied to rigid-body trajectory data. 
	(a) Visualization of the trajectory for two different choices (red and blue) of the coordinate system attached to the body; (b) Twist coordinates extracted from the trajectory via numerical differentiation using the logarithmic map~\cite{lynch2017modern}; (c) Components of the invariant representation $\boldsymbol{U}$ obtained without regularization. In this case, the lower-diagonal entries $\varepsilon_{ij}$ of $\boldsymbol{U}$ are exactly zero; (d) Components of $\hat{\boldsymbol{U}}$ obtained with regularization. In this case, the lower-diagonal entries $\varepsilon_{ij}$ of $\boldsymbol{U}$ are approximately zero.} 
	\label{fig:overview}
	\vspace{-5pt}
\end{figure}
 
\section{Discussion and Conclusion}
\label{sec:discussion}
The goal of this paper was to present a novel coordinate-invariant representation of screw trajectories while ensuring (1) pure locality of the representation, (2) low computational complexity, and (3) high robustness. This paper successfully achieved this goal, which is substantiated by the following contributions. We introduced the $SU$-decomposition and the corresponding DUTIR of $\boldsymbol{\Xi}$. The matrix $\boldsymbol{\Xi}$ captures the local second-order evolution of the screw trajectory, while the DUTIR, $\boldsymbol{U}$, provides a coordinate-invariant representation of this evolution. The proposed $SU$-decomposition is computationally efficient, since it involves the computation of analytical, closed-form solutions without iterative procedures. Furthermore, we introduced regularization to enhance the robustness of the $SU$-decomposition near irregularity. 

The proposed $SU$-decomposition and regularization have been experimentally validated for motion trajectory data in prior works. Specifically, inspired by this decomposition, the work in~\cite{phdarno} derived a coordinate-invariant similarity measure between two rigid-body trajectories. The derived invariant similarity measure was then experimentally validated for rigid-body motion recognition using three distinct rigid-body motion datasets. 
Despite substantial coordinate system variations, the approach achieved a high average recognition rate of 96.5\% across the different datasets, outperforming other similarity measures based on existing invariant trajectory representations~\cite{DeSchutterJoris2010,Vochten2015,vochten2023invariant,DSRF2018,RRV2018,Lee2018}.  

The recognition experiments in~\cite{phdarno} also showed that the proposed regularization significantly improved the recognition robustness when dealing with singular motions, such as pure translations. Although the proposed regularization reintroduces a sensitivity to the choice of the origin of the coordinate system, this sensitivity has shown to remain minor in~\cite{phdarno}. This is because, by design, the regularization is only activated near singular cases and furthermore, the sensitivity of the invariant representation to variations in coordinate-system-origin is limited by minimizing the lower-diagonal terms in the first column of $\hat{\boldsymbol{U}}_2$. 
 
In other prior work~\cite{HPM_recognition}, a real-time recognition system was developed inspired by the proposed $SU$-decomposition. This system robustly recognized gestures (i.e. motion signatures traced by the palm of the human hand) across varying coordinate systems in real time, achieving a solid $F_1$-score of 92.3\%. 

The discussed prior works focused on the motion trajectories of rigid bodies. Importantly, the $SU$-decomposition and DUTIR are also directly applicable to the trajectories of interaction forces (i.e. through the wrench). The application to wrenches is for example relevant for identifying object-manipulation tasks in contact with the environment. Incorporating wrench information is then beneficial if the aim is to discriminate motions that exhibit similar motion profiles but differ significantly in their contact wrench profiles. Experimental validation of the proposed methods for this type of task is part of future work. 

However, invariant representations have their limitations and hence should not be employed in every practical scenario. For example, in scenarios where coordinate system calibration is straightforward, reliable, and the trajectories are consistently performed relative to the calibrated coordinate system (e.g. performed at the same location, in the same direction, etc.), then traditional coordinate-dependent methods may outperform invariant methods. This is because invariant representations, by design, discard information tied to the coordinate system. Nevertheless, invariant representations are particularly valuable for analysing trajectories when coordinate systems are misaligned with their exact relationships unknown, when the relation between motion direction and coordinate axes is non-deterministic, or when aiming to eliminate the need for coordinate system calibration when such calibration is labour-intensive.

In conclusion, the proposed DUTIR and $SU$-decomposition offer a promising way for future algorithms in robotics that need to compare trajectories in an invariant manner without compromising robustness or computational efficiency.

\begin{credits}

\subsubsection{\ackname} This work was supported by a project that has received funding from the European Research Council (ERC) under the European Union's Horizon 2020 Research and Innovation Programme (Grant agreement No. 788298).

\subsubsection{\discintname}
The authors have no competing interests to declare that are relevant to the content of this article. 
\end{credits}
%
%

\newpage
\appendix

\section{Interpretation of the $SU$-decomposition: Proofs}
\phantomsection
\label{app:proofs}

In Section~\ref{sec:geom}, it is stated that:
\begin{enumerate}
	\item The first column $\boldsymbol{r}_1$ of $\boldsymbol{R}$ is parallel to the screw axis of $\boldsymbol{\xi}[x_{i-1}]$;
	\item The third column $\boldsymbol{r}_3$ of $\boldsymbol{R}$ is parallel to the common normal of the screw axes of $\boldsymbol{\xi}[x_{i-1}]$ and $\boldsymbol{\xi}[x_{i}]$;
	\item The vector $\boldsymbol{p}$ represents a point on the screw axis of $\boldsymbol{\xi}[x_{i-1}]$;
	\item The vector $\boldsymbol{p}$ represents the intersection point between the screw axis of $\boldsymbol{\xi}[x_{i-1}]$ and the common normal of the screw axes of $\boldsymbol{\xi}[x_{i-1}]$ and $\boldsymbol{\xi}[x_{i}]$.
\end{enumerate} 
In this appendix, we present proofs of these statements. Before presenting the proofs, we first summarize a set of relations involving $\boldsymbol{\alpha}[x_{i-1}]$, $\boldsymbol{\alpha}[x_{i}]$, $\boldsymbol{r}_1$, $\boldsymbol{r}_2$, $u_{11}$, $u_{12}$, and $u_{22}$. The vectors $\boldsymbol{r}_1$ and $\boldsymbol{r}_2$ are obtained by orthonormalizing $\boldsymbol{\alpha}[x_{i-1}]$ and $\boldsymbol{\alpha}[x_{i}]$. The scalars $u_{11}$, $u_{12}$, and $u_{22}$ represent the components of $\boldsymbol{\alpha}[x_{i-1}]$ and $\boldsymbol{\alpha}[x_{i}]$ along $\boldsymbol{r}_1$ and $\boldsymbol{r}_2$ (see Fig.~\ref{fig:frame_orientation}).
\begin{figure}
	\centering
	\includegraphics[width=0.22\textwidth]{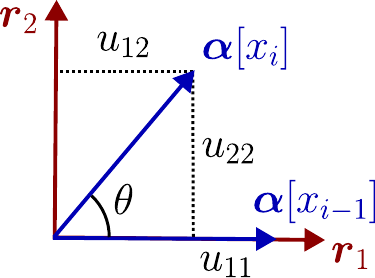}
	\caption{Geometric relations between $\boldsymbol{\alpha}[x_{i-1}]$, $\boldsymbol{\alpha}[x_{i}]$, $u_{11}$, $u_{12}$, and $u_{22}$.} 
	\label{fig:frame_orientation}
\end{figure}

By applying trigonometric identities, the following relations can be derived:
\begin{align}
	\label{eq:list1}
	\Vert \boldsymbol{\alpha}[x_{i-1}] \Vert &= u_{11}, \\
	\label{eq:list2}
	\boldsymbol{\alpha}[x_{i-1}]\cdot\boldsymbol{\alpha}[x_{i}] &= \Vert \boldsymbol{\alpha}[x_{i-1}] \Vert \Vert \boldsymbol{\alpha}[x_{i}] \Vert \cos \theta = u_{11}u_{12}, \\
	\label{eq:list3}
	\boldsymbol{\alpha}[x_{i-1}]\times\boldsymbol{\alpha}[x_{i}] &= \Vert \boldsymbol{\alpha}[x_{i-1}] \Vert \Vert \boldsymbol{\alpha}[x_{i}] \Vert \sin \theta ~\boldsymbol{r}_3 = u_{11}u_{22}\boldsymbol{r}_3,
\end{align}
and:
\begin{equation}
	\label{eq:list4}
	\small
	\Vert \boldsymbol{\alpha}[x_{i-1}] \Vert^2 \Vert \boldsymbol{\alpha}[x_{i}] \Vert^2 - \left(\boldsymbol{\alpha}[x_{i-1}]\cdot\boldsymbol{\alpha}[x_{i}]\right)^2 = u_{11}^2 \Vert \boldsymbol{\alpha}[x_{i}] \Vert^2 \left(1-\cos^2 \theta\right) = u_{11}^2u_{22}^2. 
\end{equation}
Next, we present the proofs of the four enumerated statements.

\textit{Proof of 1}: The proof follows from \eqref{eq:direction_screw} and \eqref{eq:r1}. Equation \eqref{eq:direction_screw} shows that the direction of the screw axis of $\boldsymbol{\xi}[x_{i-1}]$ is obtained by normalizing the vector $\boldsymbol{\alpha}[x_{i-1}]$, while \eqref{eq:r1} confirms that $\boldsymbol{r}_1$ represents this normalized vector.  \hfill$\blacksquare$

\textit{Proof of 2}: The proof follows from \eqref{eq:list3}, which shows that  $\boldsymbol{r}_3$ lies parallel to the vector $\boldsymbol{\alpha}[x_{i-1}] \times \boldsymbol{\alpha}[x_i]$. This vector is orthogonal to the screw axes of $\boldsymbol{\xi}[x_{i-1}]$ and $\boldsymbol{\xi}[x_{i}]$, and thus defines the direction of their common normal. \hfill $\blacksquare$

\textit{Proof of 3}: The point $\boldsymbol{p}$ lies on the screw axis of $\boldsymbol{\xi}[x_{i-1}]$ if and only if the following parametric line equation is satisfied:
	\begin{align}
		\label{eq:line_eq_screw_axis0}
		\boldsymbol{p} &=  \boldsymbol{p}_\perp[x_{i-1}]+ p_\parallel  \boldsymbol{e} .
	\end{align}
	The first term in \eqref{eq:line_eq_screw_axis0} represents the point on the screw axis of $\boldsymbol{\xi}[x_{i-1}]$ closest to the origin of the coordinate system (see \eqref{eq:p_perp}). The second term in \eqref{eq:line_eq_screw_axis0} represents a displacement along the screw axis of $\boldsymbol{\xi}[x_{i-1}]$, with $p_\parallel\in \mathbb{R}$ a free parameter.
	
	By substituting \eqref{eq:direction_screw} and \eqref{eq:p_perp} in \eqref{eq:line_eq_screw_axis0}, then left multiplying the result with $\boldsymbol{R}^\top$, and finally using $\boldsymbol{p}^*=\boldsymbol{R}^\top\boldsymbol{p}$ and \eqref{eq:list1}, it can be shown that~\eqref{eq:line_eq_screw_axis0} reduces to:

	\begin{equation}
		\boldsymbol{p}^* = \small \frac{1}{u_{11}} \begin{pmatrix}
			0 \\ -(\boldsymbol{R}^\top \boldsymbol{B})_{31} \\
			(\boldsymbol{R}^\top \boldsymbol{B})_{21}
		\end{pmatrix} + p_\parallel \begin{pmatrix}
			1 \\ 0 \\ 0
		\end{pmatrix},
	\end{equation}
	which leads to the following three conditions:
	\begin{equation}
		\label{eq:relations_p}
		p^*_x = p_\parallel , \quad
		u_{11} p^*_y = - (\boldsymbol{R}^\top \boldsymbol{B})_{31} \quad \text{and} \quad
		u_{11} p^*_z = (\boldsymbol{R}^\top \boldsymbol{B})_{21} .
	\end{equation}
	The first condition is satisfied, since $p_\parallel$ is a free parameter. 
	The other two conditions are also satisfied, since they correspond to the expressions for computing $p^*_z$ and $p^*_y$ shown in \eqref{eq:extendedqr_scalar0} and \eqref{eq:extendedqr_scalar1}, respectively. This proves that the line equation in \eqref{eq:line_eq_screw_axis0} is satisfied, and hence that $\boldsymbol{p}$ lies on the screw axis of $\boldsymbol{\xi}[x_{i-1}]$.  \hfill$\blacksquare$

\textit{Proof of 4}: The common normal of two axes is the line that perpendicularly intersects both axes. The distance between the two intersection points is the shortest distance between the two axes. These two intersection points can hence be found by identifying the points on each axis with minimum mutual distance.
	
	Similarly to~\eqref{eq:line_eq_screw_axis0}, we can define a line equation for the screw axis of the second screw $\boldsymbol{\xi}[x_i]$. We denote the entities corresponding to the first and second screw, $\boldsymbol{\xi}[x_{i-1}]$ and $\boldsymbol{\xi}[x_i]$, with the subscripts $1$ and $2$, respectively, resulting in:
	\begin{align}
		\boldsymbol{p}_1 =  \boldsymbol{p}_{\perp1} + p_{\parallel1}~  \boldsymbol{e}_1 \quad \text{and} \quad \boldsymbol{p}_2 =  \boldsymbol{p}_{\perp2}+ p_{\parallel2}~ \boldsymbol{e}_2 .
	\end{align}
	The points on the two screw axes with minimal mutual distance can be found by minimizing the squared distance, \mbox{$d^2 = \Vert \boldsymbol{p}_1- \boldsymbol{p}_2\Vert^2$}. The minimum can be found by searching for the points where the gradient, $\boldsymbol{\nabla}d^2 = (\frac{\partial d^2}{\partial p_{\parallel1}}, \frac{\partial d^2}{\partial p_{\parallel2}})$, is equal to the zero vector. This results in a system of two equations in the variables $p_{\parallel1}$ and $p_{\parallel2}$. Following this approach, the following expression for $p_{\parallel1}$ can be found:
	\begin{equation}
		\label{eq:p_parallel_long}
		p_{\parallel1}=\small\frac{\Vert\boldsymbol{\alpha}[x_{i-1}]\Vert\left[\boldsymbol{\alpha}[x_{i-1}]\cdot\left(\boldsymbol{\alpha}[x_{i}]\times\boldsymbol{\beta}[x_{i}]\right)\right]+\frac{\boldsymbol{\alpha}[x_{i-1}]\cdot\boldsymbol{\alpha}[x_{i}]}{\Vert\boldsymbol{\alpha}[x_{i-1}]\Vert}\left[\boldsymbol{\alpha}[x_{i}]\cdot\left(\boldsymbol{\alpha}[x_{i-1}]\times\boldsymbol{\beta}[x_{i-1}]\right)\right]}{\Vert \boldsymbol{\alpha}[x_{i-1}]\Vert^2 \Vert \boldsymbol{\alpha}[x_{i}]\Vert^2 - \left(\boldsymbol{\alpha}[x_{i-1}] \cdot \boldsymbol{\alpha}[x_{i}]\right)^2} .
	\end{equation}
	Using \eqref{eq:list1}, \eqref{eq:list2}, and \eqref{eq:list4}, expression \eqref{eq:p_parallel_long} can be rewritten as:
	\begin{equation}
		\label{eq:p_parallel_reduced}
		p_{\parallel1}=\small\frac{u_{11} \boldsymbol{\alpha}[x_{i-1}]\cdot\left(\boldsymbol{\alpha}[x_{i}]\times\boldsymbol{\beta}[x_{i}]\right)+u_{12}\boldsymbol{\alpha}[x_{i}]\cdot\left(\boldsymbol{\alpha}[x_{i-1}]\times\boldsymbol{\beta}[x_{i-1}]\right)}{u_{11}^2u_{22}^2} .
	\end{equation}
	Using the property: $\boldsymbol{a}\cdot\left(\boldsymbol{b} \times \boldsymbol{c}\right) = \boldsymbol{c}\cdot\left(\boldsymbol{a} \times \boldsymbol{b}\right)$, and relation~\eqref{eq:list3}, \eqref{eq:p_parallel_reduced} reduces to:
	\begin{equation}
		p_{\parallel1}=\small\frac{u_{11}^2u_{22} \boldsymbol{\beta}[x_{i}]\cdot\boldsymbol{r}_3-u_{11}u_{12}u_{22}\boldsymbol{\beta}[x_{i-1}]\cdot\boldsymbol{r}_3}{u_{11}^2u_{22}^2},
	\end{equation}
	which finally can be rewritten as:
	\begin{equation}
		p_{\parallel1}=\small\frac{1}{u_{22}}\left[( \boldsymbol{R}^\top\boldsymbol{B})_{32}-\frac{u_{11}}{u_{12}}( \boldsymbol{R}^\top\boldsymbol{B})_{31} \right].
	\end{equation}
	From \eqref{eq:extendedqr_scalar1} and \eqref{eq:extendedqr_scalar}, it follows that $p_{\parallel1} = p^*_x$, which proves the argument. \hfill $\blacksquare$

\bibliographystyle{splncs04}
\bibliography{mybibliography}

\end{document}